\newtheorem{theorem}{Theorem}
\newtheorem{proof}{Proof}[section]
\def\BibTeX{{\rm B\kern-.05em{\sc i\kern-.025em b}\kern-.08em
    T\kern-.1667em\lower.7ex\hbox{E}\kern-.125emX}}
\begin{document}

\title{Dynamic Decoupling of Placid Terminal Attractor-based Gradient Descent Algorithm}

\author{
\IEEEauthorblockN{Jinwei Zhao}
\IEEEauthorblockA{\textit{Faculty of Computer Science and Engineering} \\
\textit{Xi'an University of Technology}\\
Xi'an, China \\
zhaojinwei@xaut.edu.cn}
\\
\IEEEauthorblockN{Alessandro Betti}
\IEEEauthorblockA{\textit{IMT Scuola Alti Studi}
55100 Lucca, Italy\\
alessandro.betti@imtlucca.it}
\\
\IEEEauthorblockN{Hongtao Zhang}
\IEEEauthorblockA{\textit{Faculty of Computer Science and Engineering} \\
\textit{Xi'an University of Technology}\\
Xi'an, China\\
2640498987@qq.com}
\\
\IEEEauthorblockN{Xinhong Hei}
\IEEEauthorblockA{\textit{Faculty of Computer Science and Engineering} \\
\textit{Xi'an University of Technology}\\
Xi'an, China\\
heixinhong@xaut.edu.cn}
\and
\IEEEauthorblockN{Marco Gori}
\IEEEauthorblockA{\textit{Department of Information Engineering and Mathematics} \\
\textit{University of Siena}\\
Siena, Italy \\
marco.gori@unisi.it}
\\
\IEEEauthorblockN{Stefano Melacci}
\IEEEauthorblockA{\textit{Department of Information Engineering and Mathematics} \\
\textit{University of Siena}\\
Siena, Italy \\
stefano.melacci@unisi.it}
\\
\IEEEauthorblockN{Jiedong Liu}
\IEEEauthorblockA{\textit{Faculty of Computer Science and Engineering} \\
\textit{Xi'an University of Technology}\\
Xi'an, China\\
1041674453@qq.com}
\and

}
\maketitle

\begin{abstract}
Gradient descent (GD) and stochastic gradient descent (SGD) have been widely used in a large number of application domains. Therefore, understanding the dynamics of GD and improving its convergence speed is still of great importance. This paper carefully analyzes the dynamics of GD based on the terminal attractor at different stages of its gradient flow. On the basis of the terminal sliding mode theory and the terminal attractor theory, four adaptive learning rates are designed. Their performances are investigated in light of a detailed theoretical investigation, and the running times of the learning procedures are evaluated and compared. The total times of their learning processes are also studied in detail. To evaluate their effectiveness, various simulation results are investigated on a function approximation problem and an image classification problem.
\end{abstract}

\begin{IEEEkeywords}
Neural Networks, Gradient Descent, Terminal Attractor, Terminal Sliding Mode, Gradient Flow
\end{IEEEkeywords}

\section{Introduction}

Gradient descent (GD)\cite{boyd2004convex} and stochastic gradient descent (SGD) \cite{Robbins1951},\cite{NIPS2007_0d3180d6} have been widely used to solve many optimization problems in complex machine learning, also in plain machine learning. Understanding the dynamics of GD and SGD is of vital importance\cite{zhang2019gradient} to improve the quality of the optimization processing where the adaptive gradient methods is largely diffused, e.g, Adam\cite{Kingma2014AdamAM}, AdaGrad\cite{duchi2011adaptive}, RMSProp\cite{hinton2012neural}, Nadam\cite{dozat2016incorporating}, AMSGrad\cite{reddi2019convergence}, Radam\cite{liu2019radam}. 

Specifically, we study the dynamics of GD for minimizing a differentiable nonconvex function \(f:\textbf{R}\rightarrow R\), where \(f(w)\) can potentially be stochastic, i.e., \(f(w) = E_x[F(w, x)]\). Such a formula should be able to cover a wide range of problems in machine learning. 

A common characteristic of a GD with the fixed learning rate is that the gradient flow of the GD on the image of the nonconvex function \(f(w)\) incurs an asymptotic convergence\cite{yu2011improved}. It means that the closer to its global minima or its local minima, the slower its convergence speed, while at the positions far from the minimums the convergence may still be slow\cite{yu2011improved}. In addition, if the GD has a small learning rate, it may be easy for its gradient flow  to be trapped in a local minima. To address these problems, many variations of the basic GD have been proposed in recent decades, for example, except of the above adaptive gradient methods, some variations of GD and SGD, such as clipped GD\cite{zhang2019gradient} and normalized GD\cite{arora2022understanding}.  The dynamic analysis on GD and SGD has made great progress, such as, global convergence analysis\cite{lei2019stochastic},\cite{zhang2019gradient},\cite{gower2021sgd} , \cite{khaled2020better}, \cite{mertikopoulos2020almost}, \cite{patel2022stopping}, \cite{patel2022global},\cite{betti2023toward}, local convergence analysis\cite{mertikopoulos2020almost}, greedy and global complexity analysis\cite{gower2021sgd}, \cite{khaled2020better}, asymptotic weak convergence \cite{wang2021convergence}, and saddle-point analysis\cite{fang2019sharp}, \cite{mertikopoulos2020almost}, \cite{jin2021nonconvex}. To implement such an analysis, existing works make a variety of assumptions\cite{patel2022global}, such as (1) the objective function is bounded from below, \(\inf (f) > -\infty\), (2) the gradient function is globally Lipschitz continuous(\(f \in C^{1,1}(\textbf{R}, R)\)) or meets some other relaxed smoothness assumption, (3) the stochastic gradients are unbiased, (4) the variance of the stochastic gradients is bounded. However, we believe that a more complete and valuable conclusion can be obtained by simultaneously decoupling different stages of an arbitrary gradient flow of a GD on the image of the nonconvex function \(f(w)\). Based on the conclusion a valuable optimal method can be designed.

By analyzing the characteristics of different stages of the gradient flow of a GD,  the terminal attractor-based GD algorithm (TAGD) is proposed for using the terminal attractor (TA) to improve its convergence speed and stability.  In the case of local-minima-free error functions, TAGD can guarantee the speed of convergence and the stability of the solution\cite{zak1989terminal}\cite{wang1991terminal} \cite{bianchini1994does}\cite{bianchini1997terminal}. Unfortunately, in the case of multimodal functions, there are no theoretical guarantees that a global solution can be obtained stably unless certain favorable conditions are satisfied \cite{bianchini1994does}. This is due to the fact that the denominator of the formula for the derivative of the weight with respect to time converges more quickly than the numerator of the formula\cite{bianchini1994does}\cite{bianchini1997terminal}.  To improve the stability and speed of the TAGD, some TAs based on the Terminal Sliding Mode (TSM) scheme have been proposed\cite{yu2002fast}\cite{batbayar2007fast}. Recently, a terminal recurrent neural network (RNN) model for time-variant computing has been proposed by featuring ﬁnite-valued activation functions (AFs) based on the terminal attractor. It obtain a ﬁnite-time convergence of error variables \cite{sun2021finitely}\cite{ham2000principles}\cite{zhang2002recurrent}\cite{zhang2005design}\cite{xiao2014different}\cite{li2013accelerating}\cite{xiao2016new}\cite{xiao2017design}\cite{xiao2018solving}\cite{xiao2019performance}\cite{li2019variable}. It achieves a convergence of the error variables in a finite time. These algorithms can speed up the convergence of the back-propagation algorithm when the current solution is far from the local minima and the global minima. However, the internal instability hidden in the weight updating equation based on the terminal attractor still exists\cite{bianchini1997terminal}. These disadvantages reduce the convergence efficiency and convergence speed of the gradient flows of the GD and the SGD.

In the paper, firstly, we carefully analyze the dynamics of the GD and the GD based on the terminal attractor by decoupling the different stages of their gradient flows. Secondly, four adaptive updating methods of the learning rate of the GD are designed based on the terminal attractor. Their performance are analyzed in theory and the total times of their learning processes are investigated based on the terminal sliding mode. The effectiveness of the methods is evaluated by means of various simulation results for some machine learning problems.

\section{Decoupling the gradient flow of an ordinary differential equation of a GD}
 We consider solving a nonconvex optimization problem
 \begin{equation}
 \min_{w \in \textbf{R}=R^d}E(w)
 \label{E(w)}. 
\end{equation}
The derivatives of the objection function, \(E \geq 0\),  with respect to \(w\) are \(\nabla_{w}E\), since  \(E\) is differentiable with respect to \(w\), \(E \in C^{1}\) .  In this paper, the optimal target value of this problem is assumed to be \(E = 0\). If a fixed learning rate \(\gamma\) is given, \(\gamma \neq 0  \wedge \gamma \ll \infty\), and the optimal problem\ref{E(w)} has its unique global optimal solution. By a GD the derivative of \(w\) over time \(t\) (where with "time" we mean the time of the optimization procedure) is given by
\begin{equation}
\frac{dw}{dt}=-\gamma{}{\nabla{}}_{w}E.
\label{ODE}
\end{equation}
The solution of the ordinary differential equation(ODE)\ref{ODE}  form its gradient flow. Along a gradient flow on the image surface of \(E\), the GD can search for its global optimum. If \(E\) is convex, the gradient flow of the GD may contain one global minimum. If \(E\) is a non-convex function,  the gradient flow may contain many local minima as well as a global minimum. Around the local minima or the global minima, because \(\nabla_{w}E\rightarrow 0\), \(\frac{dw}{dt} \rightarrow 0\). So the closer to the global minima or a local minima, the slower the convergence speed of the gradient flow. At positions far from the minima, \({\nabla{}}_{w}E \neq 0\)  and \(\frac{dw}{dt} \neq 0\). However, if \(\gamma\) or \({\nabla{}}_{w}E\) is very small, the convergence may also be very slow regardless of the value of \(E\). 

To improve the convergence speed of the gradient flow and efficiently solve the nonconvex problem (\ref{E(w)}), the GD must perform the following tasks. The first task is that when the current gradient flow is far from the minima of the problem (\ref{E(w)}), its convergence speed should maintain a high value. The second task is that around the local minima of the problem  (\ref{E(w)}), the gradient flow must leave the current position and jump to another position. The third task is that around the global minimum of the problem (\ref{E(w)}), the gradient flow should remain stable in order to asymptotically converge to the global minimum. 

To accomplish these tasks, Zak\cite{zak1989terminal} proposed the theory of  terminal attractors. On the basis of the theory, some researchers have proposed some related learning algorithms \cite{wang1991terminal} \cite{bianchini1994does}\cite{bianchini1997terminal}\cite{batbayar2007fast}.  

\section{Decoupling the gradient flow of a GD based on the terminal attractor \label{Decoupling GF of GDTA}}
For solving the optimization problem (\ref{E(w)}) , the ODE of a GD is designed as 
\begin{equation}
\frac{dw}{dt}=-\gamma{}{\nabla{}}_wE=-\frac{\Omega{}\left(E\right)}{{\left\Vert{}{\nabla{}}_wE\right\Vert{}}^2}{\nabla{}}_wE.
\label{dw/dt}
\end{equation}
Based on the ODE\ref{dw/dt}, the ODE of the function $E$ is
\begin{equation}
\frac{dE}{dt}={\left({\nabla{}}_wE\right)}^T\frac{dw}{dt}=-{\left({\nabla{}}_wE\right)}^T\frac{\Omega{}\left(E\right)}{{\left\Vert{}{\nabla{}}_wE\right\Vert{}}^2}{\nabla{}}_wE=-\Omega{}\left(E\right).
\label{dE/dt)}
\end{equation}
If the function $\Omega{}\left(E\right)=E^{k}, 0<k<1$, using the Taylor expansion, we can find the closed-form solution to the ODE (\ref{dw/dt})\cite{bianchini1997terminal},
\begin{equation}
\tilde{w}\left(t\right)\approx{}\pm{}{\left({\tilde{w}}_0^{2m}-2mh_0t\right)}^{\frac{1}{2m}},
\end{equation}
where $\tilde{w}\left(t\right)=w\left(t\right)-w_{min}$, and ${\tilde{w}}_0=\tilde{w}\left(0\right)$, $w_{min}$ is a local minimum of the optimization problem (\ref{E(w)}), \(2m - 1\) is the multiplicity of the first derivative at the minimum and \(h_{0}\) is a constant related to $w_{min}$. It can be found that $\tilde{w}\left(t\right)\not=0$ and the local minimum $w_{min}$ isn't reached until $t=\frac{{\tilde{w}}_0^{2m}}{2mh_0}$.

The Euler's solution of the ODE (\ref{dw/dt}) is
\begin{equation}
w_{n+1}=w_0 - \eta{}\gamma{}{\nabla{}}_wE_n,
\end{equation}
where $\eta{} > 0$ is the integration step. When approaching the local minimum $w_{min}$, by the Lipshitz condition \(\left\Vert{}{\nabla{}}_wE\right\Vert{}\rightarrow 0\). Since \(E\neq 0\), $\gamma{}=\frac{E^k}{{\left\Vert{}{\nabla{}}_wE\right\Vert{}}^2} \rightarrow \infty{}$ and the gradient flow is likely to escape the neighborhood of the local minima $w_{min}$. A restart of the $E$ dynamics will also occur in this case.

According to the ODE (\ref{dw/dt}) , it can be seen that for some choices of the power $k$ of $E$, the jumps of the gradient flow can occur even in the neighborhood of the global minimum\cite{bianchini1997terminal}. In fact, according to the Lipshitz condition, \(\left\Vert{}{\nabla{}}_wE\right\Vert{}\rightarrow 0\) may occur when the gradient flow approaches the global minimum, $E \rightarrow 0$. In the ODE\ref{dw/dt}, if the denominator part of the fraction on the right side of the equal sign is a higher order infinitesimal than its numerator part, the ODE\ref{dw/dt} can lead to large increments for the weights \(w\). 

For the one-dimensional case, using Taylor expansions for the numerator and the denominator of  the ODE (\ref{dw/dt}) and discarding the higher order terms, the ODE (\ref{dw/dt}) can be derived as follows\cite{bianchini1997terminal} 
\begin{equation}
\frac{d\tilde{w}}{dt}=O\left({\tilde{w}}^{\left(2mk-2m+1\right)}\right),
\label{ddw/t}
\end{equation}
where $\tilde{w}=w-w_g$, and $2m > 0$ is the multiplicity of the global minimum $w_g$ of the optimization problem (\ref{E(w)}) . From the ODE (\ref{ddw/t}) we see that if $k<1-\frac{1}{2m}$, when $w\rightarrow{}w_g$, $\frac{d\tilde{w}}{dt}\rightarrow{}\infty{}$ will generate a large jump of $w$. The jumping will cause $w$ to leave the global minima.  Even though $\frac{d^{2}E}{dE dt}=-kE^{k-1}=-\infty{}$ at $E=0$ and the Lipshitz condition of $\frac{dE}{dt}$ is violated, the jump transforms a stable equilibrium point into a numerically unstable attractor. Thus, at this moment, the same thing happens to the neighborhood of the global minimum as to the neighborhood of a local minimum. The direction and extent of such jumps are not guaranteed to lead the gradient flow closer to the global minimum $w_g$. The gradient flow may get stuck in the neighborhood of some other local minimum, leading to an oscillating solution for $E$ \cite{bianchini1997terminal}. This cause a instability of the gradient flow around the global minimum.

When the weights \(w\) are far away from these minima, $E \neq 0$ and \(\nabla_{w}E\neq 0\). As \(\nabla_{w}E\) decreases and \(E\) doesn't decrease, according to the ODE (\ref{dw/dt}), \(\frac{dw}{dt}\) may increase which improves the convergence speed of the gradient flow. 

\section{Some adaptive learning rates based on a terminal attractor}
As described in the section\ref{Decoupling GF of GDTA}, the adaptive learning rate based on the terminal attractor can accelerate the convergence of a GD and help the GD to easily jump out of its local minima for solving the optimization problem (\ref{E(w)}).  However, the instability of the gradient flow of the GD  around the global minimum is still there.  The instability will reduce the convergence speed of the GD.  In this section, four learning rates based on the terminal attractor are designed and the gradient flows of the GD and the SGD with them are decoupled  for analyzing their convergence properties. And then, we study the total runtime of the learning processes of the first two learning rates using the terminal sliding modes (TSM) scheme\cite{Venkataraman1992}. The last two learning rates are the same as the runtime formulas of the first two learning rates.
\subsection{A learning rate based on a traditional terminal attractor \label{TA} }
If $\Omega{}\left(E\right)=\beta{}E^{\frac{q}{p}}$, \(\beta{}>0\),   the learning rate is 
\begin{equation}
\gamma{}=\frac{\Omega{}\left(E\right)}{{\left\Vert{}{\nabla{}}_wE\right\Vert{}}^2}=\frac{\beta{}E^{\frac{q}{p}}}{{\left\Vert{}{\nabla{}}_wE\right\Vert{}}^2},
\label{gamma_TA}
\end{equation} 
where $p, q>0$ . For guaranteeing that \(E^{\frac{q}{p}}\)  always is real,  $p$ and $q$ should be positive odd numbers. Certainly, if  \(E \ge 0\), $p$ and $q$ do not have to be odd numbers (This special explanation will not be made later). The learning rate is named as a learning rate based on a traditional terminal attractor (TA). The ODEs of \(w\) and \(E\) of the GD are 
\begin{equation}
\frac{dw}{dt}=-\gamma{}{\nabla{}}_wE=-\frac{\beta{}E^{\frac{q}{p}}}{{\left\Vert{}{\nabla{}}_wE\right\Vert{}}^2}{\nabla{}}_wE,
\label{dw/dt_TA}
\end{equation}
and 
\begin{equation}
    \frac{dE}{dt}={\left({\nabla{}}_wE\right)}^T\frac{dw}{dt}=-\beta{}E^{\frac{q}{p}}.
    \label{dE_dt_TA}
\end{equation}
 
 And then, we can obtain
\begin{equation}
\begin{split}  
\frac{d^{2}E}{dE dt}&= \frac{d}{d E}(-\beta{}E^{\frac{q}{p}})=-\beta{}\frac{q}{p}E^{\frac{q-p}{p}}\\
&=\left\{\begin{array}{
ccc}
0 & \frac{q}{p}>1 \\
-\beta{} & \frac{q}{p}=1 \\
-\infty{} & \frac{q}{p}<1
\end{array}\right.\ \ \ \ \ \ \ \ as\ E\rightarrow{}0.
\end{split}\label{d^2E/dEdt_TA}
\end{equation}

It can be seen that if $0<\frac{q}{p}<1$, at $E\rightarrow 0$, \(\frac{dE}{dt}\) violates the Lipshitz condition(The condition is  \(|\frac{d^{2}E}{dE dt}|<\infty\) in the case.) . At that time, \(\frac{dE}{dt}\) sharply decreases. In this case, the equilibrium point $E=0$ is "infinite" stable on the gradient flow of the GD.  The relaxation time of the gradient flow for the global minimum from any initial point is  
\begin{equation}
\begin{split}
t& = \int_{E(t_{0})}^{0}{\frac{dt}{dE} d E} =- \frac{1}{\beta} \int_{E(t_{0})}^{0}{E^{-\frac{q}{p}} d E}\\
 &= \frac{p}{\beta(p- q)} E^{1-\frac{q}{p}}(t_{0}) < \infty
\end{split}\label{t_TA}
\end{equation}

It represents the global minimum can be intersected by all the attracted transients. Thus, this point is a terminal attractor according to the definition of the terminal attractor in \cite{zak1989terminal}. The following theorem will prove that the optimization process of a SGD or a GD based on the learning rate (\ref{gamma_TA}) will finish in a finite time.
\begin{theorem}
If  the learning rate  $\gamma{}=\frac{\Omega{}\left(E\right)}{{\left\Vert{}{\nabla{}}_wE\right\Vert{}}^2}$, $\Omega{}\left(E\right)=\beta{}E^{\frac{q}{p}}$, \(\beta{}>0\),  the optimization problem is Eq.( \ref{E(w)}), and  $p$ and $q$ are positive odd numbers, 
for solving the optimization problem, a optimization process of a SGD or a GD based on the learning rate (\ref{gamma_TA}) will finish in a finite time.
\label{T1}
\end{theorem}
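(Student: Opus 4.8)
The plan is to reduce the multidimensional (and stochastic) dynamics to the scalar ordinary differential equation for the objective $E$ that was already derived in Eq.~(\ref{dE_dt_TA}), and then to show that this scalar equation drives $E$ to $0$ in finite time. Concretely, along any gradient flow governed by Eq.~(\ref{dw/dt_TA}) with the learning rate (\ref{gamma_TA}), the chain rule gives $\frac{dE}{dt} = (\nabla_w E)^T \frac{dw}{dt} = -\beta E^{q/p}$, independently of the dimension $d$ and of the particular trajectory $w(t)$, as long as $\gamma$ is well defined, i.e.\ $\nabla_w E \neq 0$. For the SGD case I would make the same reduction using the realized stochastic gradient both in the denominator of $\gamma$ and in $\frac{dw}{dt}$, so that the identity $\frac{dE}{dt} = -\beta E^{q/p}$ holds along each realization (or, under the usual unbiasedness assumption, in expectation for $\mathbb{E}[E]$); either way the quantity of interest obeys a scalar equation of the form $\dot y = -\beta y^{q/p}$ with $y \geq 0$.

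The second step is to integrate this separable equation, exactly as in Eq.~(\ref{t_TA}). Writing $\alpha := q/p$ and using that $p,q$ are positive odd with $q<p$ (the regime $0<\alpha<1$ that is flagged by Eq.~(\ref{d^2E/dEdt_TA}) and the discussion right before the statement), the exponent $-\alpha$ lies in $(-1,0)$, so $\int_0^{E(t_0)} y^{-\alpha}\,dy$ converges. Separating variables and integrating from the current value $E(t_0)$ down to $0$ yields the explicit hitting time
\begin{equation}
T \;=\; \frac{p}{\beta(p-q)}\,E(t_0)^{\,1-\frac{q}{p}} \;<\; \infty,
\end{equation}
and, equivalently, the closed-form solution $E(t) = \big(E(t_0)^{1-q/p} - \tfrac{\beta(p-q)}{p}(t-t_0)\big)^{p/(p-q)}$ for $t\in[t_0,t_0+T]$, after which $E\equiv 0$ is the terminal equilibrium. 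This is precisely the statement that the optimization finishes in \emph{finite} time; for the actual discrete algorithm the continuous hitting time $T$ together with a fixed integration step $\eta$ bounds the number of updates by $\lceil T/\eta\rceil$, which is again finite. One can also phrase the same conclusion as finite-time stability of the Lyapunov function $E$, since $\dot E \leq -\beta E^{\alpha}$ with $\alpha\in(0,1)$.

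The step I expect to be the main obstacle is the well-definedness of $\gamma$ along the whole trajectory until $E=0$: the learning rate (\ref{gamma_TA}) is singular wherever $\nabla_w E = 0$, so one must rule out the flow settling at a spurious critical point with $E>0$ (a non-global local minimum or a saddle). Here I would invoke the mechanism already discussed in Section~\ref{Decoupling GF of GDTA}: near such a point $\|\nabla_w E\|\to 0$ while $E$ stays bounded away from $0$, hence $\gamma \to \infty$ and the flow is ejected from that neighborhood, restarting the $E$-dynamics; so the only point at which the trajectory can come to rest is $E=0$, and before that the decoupled equation $\dot E = -\beta E^{q/p}$ remains valid. A secondary technical point is making the SGD reduction rigorous — deciding whether to argue per realization or in expectation, and, in the latter case, controlling the interchange of expectation with the nonlinear map $y\mapsto y^{q/p}$ (e.g.\ via Jensen, which still yields an upper bound on the expected hitting time). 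With these two points handled, the finite-time conclusion follows directly from the integral in Eq.~(\ref{t_TA}).
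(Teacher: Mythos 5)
Your proposal is correct and follows essentially the same route as the paper: both reduce the dynamics to the decoupled scalar equation $\dot E = -\beta E^{q/p}$ and conclude finite-time convergence from the integral in Eq.~(\ref{t_TA}), the paper merely repackaging this as a sum of per-epoch terminal-sliding-mode hitting times $T=\sum_i t_i<\infty$ while you integrate once and count discrete steps. You are in fact more explicit than the paper on two points it leaves implicit — the requirement $q<p$ for the exponent to lie in $(0,1)$, and the well-definedness of $\gamma$ at spurious critical points where $\nabla_w E = 0$ but $E>0$.
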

\begin{proof}
From the above equations (\ref{t_TA}) and (\ref{dE_dt_TA}), we see that $E$ is gradually converging from the initial state $E(t_{0})$ to $E=0$ along a sliding mode manifold $\frac{dE}{dt}+\beta{}E^{\frac{q}{p}}=0$. 

The solution of the ODE (\ref{dw/dt_TA}) become a gradient flow of a SGD or a GD for the optimization problem (\ref{E(w)}). If one step or one epoch of the gradient flow of the SGD or the GD can be considered as a basis of TSM, all gradient flows can be considered as many TSM controls of one order system:
\[
{\dot{s}}_{i}+{\beta{}}_{i}s_{i}^{\frac{q_{i}}{p_{i}}} = 0
,\]
If $s_i=E_i$, ${\beta{}}_i = {\beta{}}$, $q_{i}=q$  and $p_{i}=p$ are positive numbers, and $i=1,...,n$ is an epoch index of the SGD or the GD, we can get
\[{\dot{E}}_{i}+\beta{}E_{i}^{\frac{q}{p}} = 0.\]

By the same analogy, once $E_{n}=0$ is reached  in a finite time, $E_{n-1}$ will reach zero in a finite time, and so will $
E_{n-2},...,E_0$. It is easily seen that after \(n\) epochs the time for reaching the equilibrium point or the global minimum of the optimization problem \ref{E(w)}, , is 
\[
T=\sum_{i=1}^nt_i=\sum_{i=1}^{n}\frac{q}{{\beta{}}\left(p-q\right)}E_{i}^{\frac{p-q}{p}} < \infty
.\]
 Thus, the optimization process  of the SGD or the GD based on the learning rate (\ref{gamma_TA}) 
 will finish in a finite time.
\end{proof}
According to Eq. (\ref{d^2E/dEdt_TA}), to obtain a terminal attractor, $p>q$. When approaching the local minimum $w_{min}$ in the gradient flow, \(\left\Vert{}{\nabla{}}_wE\right\Vert{} \rightarrow 0\) and \(E \neq 0\). Thus, $|\frac{dw}{dt} |\rightarrow  \infty$ according to the ODE\ref{dw/dt_TA}. This will lead to a big jump on the gradient flow. It will even leave the region  around the local minimum $w_{min}$. Near the global minimum, if $q\geq{}\left(1-\frac{1}{2m}\right)p$, according to Eq. (\ref{ddw/t}) (because \(k=\frac{q}{p}\)), $\frac{dw}{dt}\rightarrow 0$. According to Eq. (\ref{d^2E/dEdt_TA}) because \(E\rightarrow{0}\), the optimization path will approach the stable equilibrium global minima. On the other hand, if $q < \left(1-\frac{1}{2m}\right)p$, the aimless jump of $w$ will occur, destroying the convergence efficiency. When the weights \(w\) are far from these minima, if \(\left\Vert{}{\nabla{}}_wE\right\Vert{}\) decreases and \(E\) doesn't decrease, according to Eq. (\ref{dw/dt_TA}), \(\frac{dw}{dt}\) will increase, which improves the convergence speed.  
\subsection{A learning rate based on a fast terminal attractor \label{FTA} }
When the current Euler solution of the optimization problem (\ref{E(w)}) is far away from the equilibrium, the learning rate based on a traditional terminal attractor (\ref{gamma_TA}) does not prevail over the linear counterpart (setting \(p = q\)) \cite{yu2002fast}. One immediate solution is to introduce the following so-called fast terminal attractor for definiting a learning rate\cite{yu2002fast}, namely as a learning rate based on a fast terminal attractor (FTA):
\begin{equation}
\gamma{}=\frac{\Omega{}\left(E\right)}{{\left\Vert{}{\nabla{}}_wE\right\Vert{}}^2}=\frac{\alpha{}E+\beta{}E^{\frac{q}{p}}}{{\left\Vert{}{\nabla{}}_wE\right\Vert{}}^2},
\label{gamma_FTA}
\end{equation}
where $\Omega{}\left(E\right)=\alpha{}E+\beta{}E^{\frac{q}{p}}$,  and  $p, q>0$.  The ODE of \(w\) and \(E\) of a GD is 
\begin{equation}
\frac{dw}{dt}=-\gamma{}{\nabla{}}_wE=-\frac{\alpha{}E+\beta{}E^{\frac{q}{p}}}{{\left\Vert{}{\nabla{}}_wE\right\Vert{}}^2}{\nabla{}}_wE,
\label{dw/dt_FTA}
\end{equation}
and
\begin{equation}
    \frac{dE}{dt}=-\left(\alpha{}E+\beta{}E^{\frac{q}{p}}\right).
    \label{dE_dt_FTA}
\end{equation}
And then, we can obtain
\begin{equation}
\begin{split} 
\frac{d^{2}E}{dEdt}&=-\left(\alpha{}+\beta{}\frac{q}{p}E^{\frac{q-p}{p}}\right)  \\
&=\left\{\begin{array}{
ccc}
-\alpha{} & \frac{q}{p}>1 \\
-\left(\alpha{}+\beta{}\right) & \frac{q}{p}=1 \\
-\infty{} & \frac{q}{p}<1
\end{array}\right.\ \ \ \ \ \ \ \ as\ E\rightarrow{}0.
\label{d^2E/dEdt_FTA}
\end{split}
\end{equation}

It can be seen that if $0<\frac{q}{p}<1$,  \(\frac{dE}{dt}\) violates the Lipshitz condition.  The equilibrium point $E=0$ is "infinite" stable on the gradient flow of the GD and $E$~gradually converge to $E=0$~from the initial state $E_0$ along a sliding mode manifold  $\frac{dE}{dt}+\alpha{}E+\beta{}E^{\frac{q}{p}}=0$. $E=0$ is a terminal attractor. The convergence time  \(t\) is
\begin{equation}
t=\frac{p}{\alpha{}\left(p-q\right)}ln\left(\frac{\alpha{}}{\beta{}}{E_0}^{\frac{p-q}{p}}+1\right)< \infty.
\end{equation}
Just like the theorem\ref{T1}, for solving the optimization problem (\ref{E(w)}), one step or one epoch of the gradient flow of a SGD or a GD can be considered as a basis of TSM. All epochs can be considered as many TSM controls of one order system:
\begin{equation}
 {\dot{E}}_{i}+{\alpha{}}E_{i}+{\beta{}}E_{i}^{\frac{q}{p}}  = 0
\end{equation}
where ${\alpha{}}>0$, ${\beta{}}>0$, and $p, q>0$. By the same analogy, once $E_{n}=0$ is reached, $E_{n-1}$ will reach zero in finite time, and so will $E_{n-1},...,E_0$. It is easy to see that the time to reach the equilibrium point, \(E_{i} \rightarrow 0\), is
\begin{equation}
\begin{split}
T=\sum_{i=1}^nt_i=\sum_{i=1}^{n}\frac{q}{{\alpha{}}\left(p-q\right)}  
ln\left({\alpha{}}{\beta{}}E_{i}^{\frac{p-q}{p}}+1\right)< \infty.
\end{split}
\end{equation}
Thus, the optimization process  based on the SGD or the GD will finish in a finite time.

In the neighborhood of the local minimum $w_{min}$, \(\left\Vert{}{\nabla{}}_wE\right\Vert{} \rightarrow 0\) and \(E \neq 0\). Thus, according to Eq.(\ref{dw/dt_FTA}),$|\frac{dw}{dt}| \rightarrow \infty$. This will lead to a large jump of the gradient flow. It will even leave the current region  around the local minimum $w_{min}$. Near the global minimum, according to Eq.(\ref{dw/dt_FTA}), because the denominator of the formula for the derivative of the weight with respect to time converges more quickly than the numerator of the formula, $\frac{dw}{dt}\rightarrow 0$. It can also be found that the FTA can relax the condition of TA for $\frac{dw}{dt}\rightarrow 0$ near the global minimum. Because \(E\rightarrow{0}\), based on Eq.(\ref{d^2E/dEdt_FTA}), when \(\frac{q}{p}<1\), \(\frac{d^{2}E}{dEdt} \rightarrow - \infty\). Thus, the gradient flow approaches a stable global minima. When the weights \(w\) are far from these minima, if \(\left\Vert{}{\nabla{}}_wE\right\Vert{}\) decreases and \(E\) doesn't decrease, according to Eq.(\ref{dw/dt_FTA}), \(\frac{dw}{dt}\) will increase, which improves the convergence speed of the gradient flow. Because $\Omega{}\left(E\right)=\alpha{}E+\beta{}E^{\frac{q}{p}}$ ($\frac{q}{p}>0$), its convergence speed \(\frac{dE}{dt}\)should be faster than that of  the GD or the SGD with a learning rate based on a traditional terminal attractor under the same conditions.

\subsection{A learning rate based on a placid terminal attractor and a learning rate based on a placid fast terminal attractor}
According to the ODE (\ref{dw/dt_TA}) and the ODE (\ref{dw/dt_FTA}), it can be seen that if  \(\nabla_{w}E\rightarrow 0\), \(\frac{dw}{dt}\) and the solutions of the ODEs often fall into infinity. The infinity value of \(w\) will cause the gradient flow jump to a uncontrollable position which will destroy its stability and reduce the speed of its convergence.  In order to ensure the finite value of \(w\), we propose a new learning rate with an upper bound based on TA, namly a learning rate based on a placid terminal attractor(PTA),
\begin{equation}
\begin{split}
\gamma{}=\frac{\beta{}E^{\frac{q}{p}}}{{\left\Vert{}{\nabla{}}_{w}E\right\Vert{}}}\delta\left(\frac{1}{\left\Vert{}{\nabla{}}_{w}E\right\Vert{}}\right),
\end{split}
\end{equation}
where  $p, q>0$ and \(\delta(\cdot)\) is the Sigmoid function. Through the following analysis and discussion, it can also be found that the PTA can relax the condition for $\frac{dw}{dt}\rightarrow 0$ near the global minimum. The ODE of \(w\) is 
\begin{equation}
\frac{dw}{dt}=-\frac{\beta{}E^{\frac{q}{p}}}{{\left\Vert{}{\nabla{}}_wE\right\Vert{}}}\delta\left(\frac{1}{\left\Vert{}{\nabla{}}_{w}E\right\Vert{}}\right){\nabla{}}_wE 
\label{dw/dt_PTA}
\end{equation}
and the ODE of \(E\) is 
\begin{equation}
\frac{dE}{dt}=-{\left\Vert{}{\nabla{}}_wE\right\Vert{}}{\beta{}E^{\frac{q}{p}}}\delta\left(\frac{1}{\left\Vert{}{\nabla{}}_{w}E\right\Vert{}}\right).
\label{dE/dt_PTA}
\end{equation}
Further, we can obtain 
\begin{equation}
\frac{d^{2}E}{dEdt}=-{\left\Vert{}{\nabla{}}_wE\right\Vert{}}\left(\beta{}\frac{q}{p}E^{\frac{q-p}{p}}\right)\delta\left(\frac{1}{\left\Vert{}{\nabla{}}_wE\right\Vert{}}\right),
\end{equation}
and

\begin{equation}
\beta{}\frac{q}{p}E^{\frac{q-p}{p}} 
=\left\{\begin{array}{
ccc}
0 & \frac{q}{p}>1 \\
\beta{} & \frac{q}{p}=1 \\
\infty{} & \frac{q}{p}<1
\end{array}\right . \ \ \ \ \ \ \ \ as\ E\rightarrow{}0.
\label{d^2E/dEdt_PTA}
\end{equation}
In accordance with the setting of the terminal attractor, $p>q$. 

For improving the rate of the convergence of the gradient flow of a GD or a SGD at the position far away from the minimums, a learning rate based on a placid fast terminal attractor, namly a learning rate based on a placid fast terminal attractor (PFTA), is proposed as follows as
\begin{equation}
\begin{split}
\gamma{}=\frac{\alpha{}E+\beta{}E^{\frac{q}{p}}}{{\left\Vert{}{\nabla{}}_wE\right\Vert{}}}\delta\left(\frac{1}{\left\Vert{}{\nabla{}}_wE\right\Vert{}}\right), 
\end{split}
\end{equation}
where $p, q>0$ and \(\delta(\cdot)\) is the Sigmoid function. Similar to the previous learning rate, we can get the following ODEs of \(w\) and \(E\),
\begin{equation}
\frac{dw}{dt}=-\frac{\alpha{}E+\beta{}E^{\frac{q}{p}}}{{\left\Vert{}{\nabla{}}_wE\right\Vert{}}}\delta\left(\frac{1}{\left\Vert{}{\nabla{}}_wE\right\Vert{}}\right){\nabla{}}_wE ,
\label{dw/dt_PFTA}
\end{equation}
and 
\begin{equation}
\begin{split} 
\frac{dE}{dt}&=-{\left\Vert{}{\nabla{}}_wE\right\Vert{}}({\alpha{}E+\beta{}E^{\frac{q}{p}}})\delta\left(\frac{1}{\left\Vert{}{\nabla{}}_wE\right\Vert{}}\right).
\label{dE/dt_PFTA}
\end{split}
\end{equation}

Further, we can obtain 
\begin{equation}
\frac{d^{2}E}{dEdt}=-{\left\Vert{}{\nabla{}}_wE\right\Vert{}}\left(\alpha{}+\beta{}\frac{q}{p}E^{\frac{q-p}{p}}\right)\delta\left(\frac{1}{\left\Vert{}{\nabla{}}_wE\right\Vert{}}\right)
\end{equation}
and
\begin{equation}
\alpha{}+\beta{}\frac{q}{p}E^{\frac{q-p}{p}} 
=\left\{\begin{array}{
ccc}
\alpha{} & \frac{q}{p}>1 \\
\left(\alpha{}+\beta{}\right) & \frac{q}{p}=1 \\
\infty{} & \frac{q}{p}<1
\end{array}\right .\ \ \ \ \ \ \ \ as\ E\rightarrow{}0.
\label{d^2E/dEdt_PFTA}
\end{equation}
Let's compare and analyze their properties by decoupling different stages of the gradient flows of a GD based on the both learning rates. When approaching the local minimum $w_{min}$, \(\left\Vert{}{\nabla{}}_wE\right\Vert{}\rightarrow 0\), \(E \neq 0\), and \(\delta\left(\frac{1}{\left\Vert{}{\nabla{}}_wE\right\Vert{}}\right)\rightarrow 1\).  Thus, according to the ODE \ref{dw/dt_PTA}, $|\frac{dw}{dt}| \rightarrow \beta{}E^{\frac{q}{p}}$ 
, and according to the ODE\ref{dw/dt_PFTA}, $|\frac{dw}{dt}| \rightarrow \alpha{}E+\beta{}E^{\frac{q}{p}}$.  

If \(E\)  has a big value, the gradient flow will have a big step forward \(|\frac{dw}{dt}| \). The steps of the both gradient flows based on PTA and PFTA do not equal  the infinity, which prevents the gradient flow from jumping out of control. The step size of PFTA is greater than one of PTA. That means that the gradient flow based on PFTA is faster than one based on PTA at that case. 

In the neighborhood of the global minimum of the problem (\ref{E(w)}),  although \(\left\Vert{}{\nabla{}}_wE\right\Vert{}\rightarrow 0\) ,  \(\delta\left(\frac{1}{\left\Vert{}{\nabla{}}_wE\right\Vert{}}\right)\rightarrow 1\) and \(\delta\left(\frac{1}{\left\Vert{}{\nabla{}}_wE\right\Vert{}}\right)\) has  an upper bound, 1. According to Eq.(\ref{dw/dt_PTA}) and  Eq. (\ref{dw/dt_PFTA}), as  \(E\rightarrow 0\) , $|\frac{dw}{dt}| \rightarrow 0$.  This result does not require any restrictions to be true, unlike TA. According to Eq.(\ref{d^2E/dEdt_PTA})  and Eq.(\ref{dE/dt_PTA}), the gradient flow will approach a stable equilibrium global minima because \(\frac{dE}{dt} \rightarrow 0\). The gradient flow of the GD based on PFTA also has the same properties around the global minimum with one of the GD based on PTA. 

If the weights \(w\) are far from these minima, \(\left\Vert{}{\nabla{}}_wE\right\Vert{}\) decreases, and \(E\) doesn't decrease, according to Eq.(\ref{dw/dt_PTA}), \(|\frac{dw}{dt}|\) will increase, which improves the convergence speed. At that time, according to Eq.(\ref{dw/dt_PFTA})  \(|\frac{dw}{dt}|\) will also increase. Because $\alpha{}E+\beta{}E^{\frac{q}{p}} > \beta{}E^{\frac{q}{p}}$
($\frac{q}{p}>0$), its convergence speed \(\frac{dE}{dt}\)should be faster than that of PTA. Based on the same analysis as in the previous two sections\ref{TA} and \ref{FTA}, the optimization process of SGD or GD based on PTA or PFTA must finish in a finite time for solving the problem (\ref{E(w)}).
\section{Experiments}
In this section, the proposed PFTA, and PTA are illustrated by the applications to the function approximation and the image classification. PFTA and PTA are applied to the SGD for adaptively modifying its learning rate, respectively. In the experiments they are compared with some popular gradient based optimization algorithms with other learning rates including some adaptive learning rates, such as  SGD\cite{NIPS2007_0d3180d6},  Adam\cite{Kingma2014AdamAM}, AdaGrad\cite{duchi2011adaptive}, RMSProp\cite{hinton2012neural}, Nadam\cite{dozat2016incorporating}, AMSGrad\cite{reddi2019convergence}, Radam\cite{liu2019radam}), conjugate gradient descent (CGD)\cite{hestenes1952methods}, limited memory BFGS (LBFGS)\cite{liu1989limited}.
\subsection{Function approximation}
In this experiment, to verify our theoretical findings, we first checked their behavior for training a simple neural network to approximate a simple binary nonlinear function
\begin{equation}
y = sin(x^{2}_{1})+sin(x^{2}_{2}).
\end{equation}
For efficient comparison, the same network topology is used in all simulations. The neural network has a single hidden layer. The hidden layer consists of five neurons. The hidden neurons use the rectified linear activation function, while the output neurons use the linear function. The Mean Square Error (MSE) is used to evaluate the behavior and performance of the algorithms. A set of 100 samples is used for the experiments. For each learning algorithm, 100 simulations are performed to obtain the average MSE curves during the training process. When the MSE of a simulation reaches \(10^{-4}\), the simulation is stopped. The maximum number of iterations is set as 800. The training data set is normalized before carrying out the experiment. 

In PFTA, the parameters \(\alpha=0.03,\beta=0.1\) and \(\frac{q}{p}=0.65\). In PTA, \(\eta=0.09\) and \(\frac{q}{p}=0.7\). Other algorithms use their optimal super-parameter settings obtained by the grid searching method as Table.\ref{tab:Table-super parameters}. 
\begin{table*}
\centering
\begin{tabular}{|c|c|c|c|c|c|c|c|}
\hline
\textbf{Algorithm}& \textbf{RMSprop} &\textbf{LBFGS}& \textbf{Adam} &\textbf{Radam} & \textbf{Nadam}& \textbf{ASGD}&\textbf{SGD}\\
\hline
learning rate&  0.095&0.045& 0.055&0.08& 0.09& 0.065&0.04\\
\hline
\end{tabular}
      \vspace{10pt}
               \caption{\label{citation-guide}Super-parameter settings of all contrast algorithms.}
\label{tab:Table-super parameters}
\end{table*}

The average MSE curves of all algorithms are shown in Fig. \ref{fig:EXP-1-All}. Table \ref{tab:Table-runtime} lists the runtime required by each algorithm to complete an experiment.
\begin{figure}
    \centering
    \includegraphics[width=0.8\linewidth]{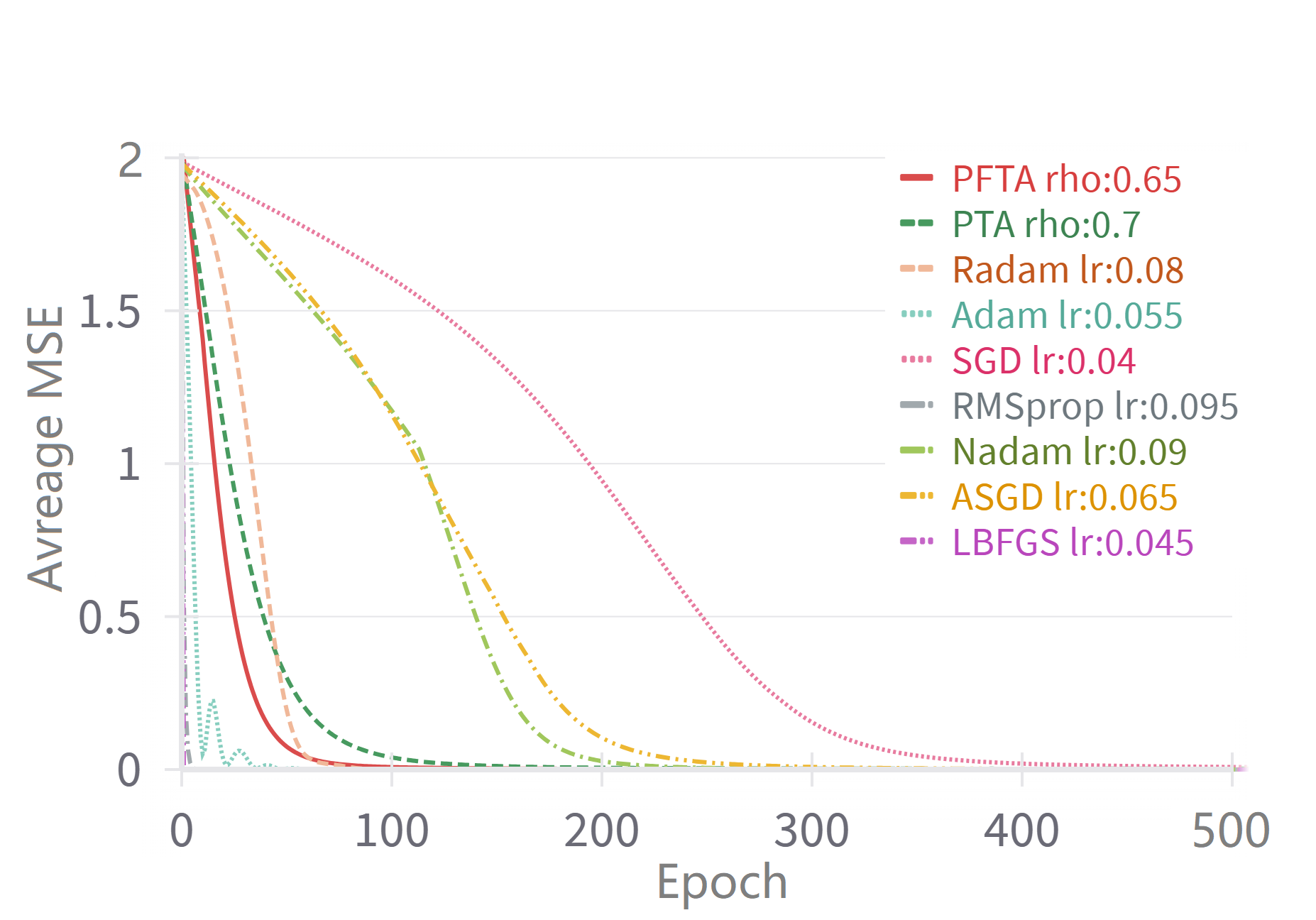}
    \caption{Performances of vary training algorithms}
    \label{fig:EXP-1-All}
\end{figure}
\begin{figure}
    \centering
    \includegraphics[width=0.8\linewidth]{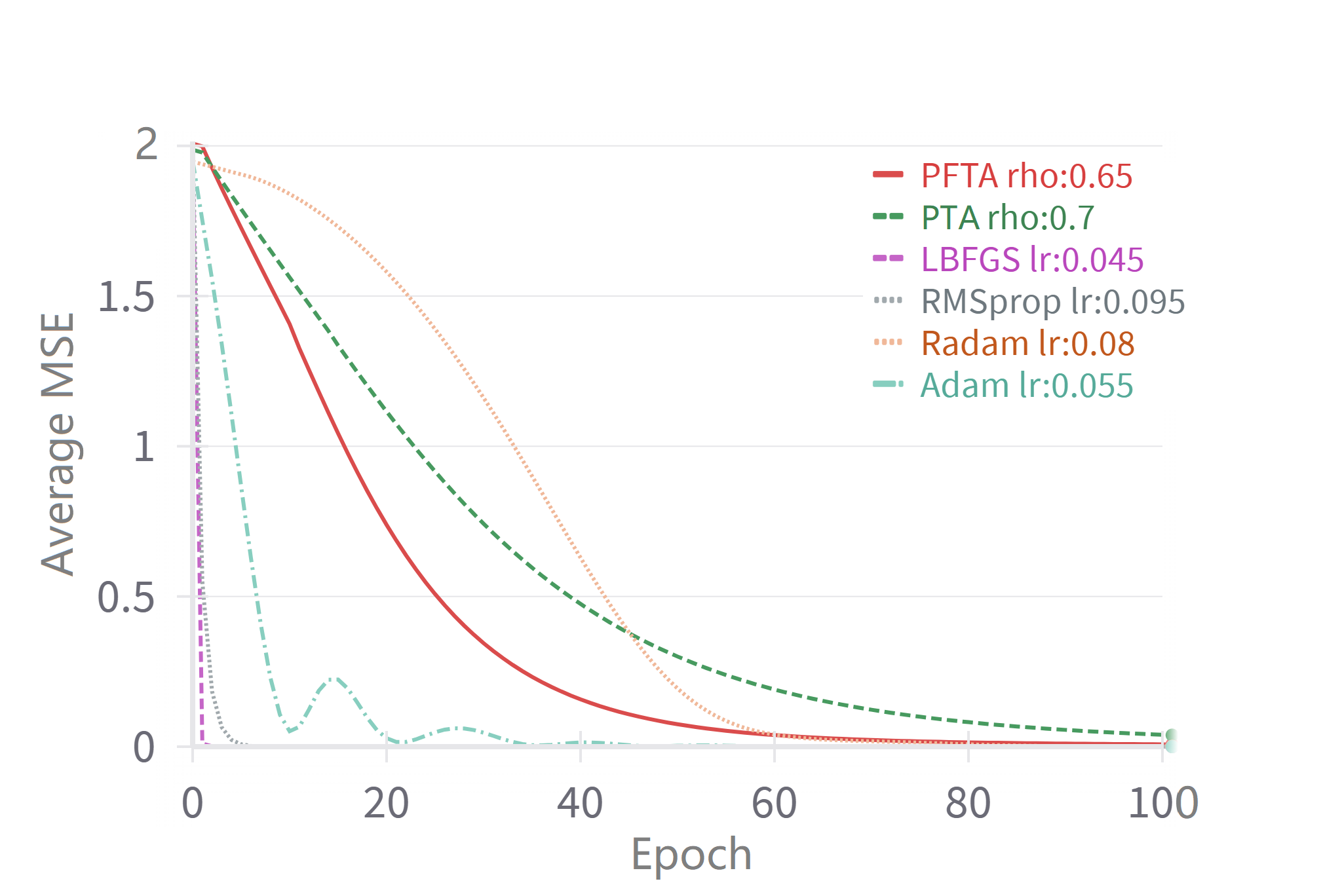}
    \caption{Performances of PFTA, PTA, Radam, Adam, SGD, LBFGs}
    \label{fig:Exp-1-PFTA-PTA-ADAM-RADAM-RMS-LB}
\end{figure}
From Fig. \ref{fig:EXP-1-All}, one can see PFTA and FTA achieve a rather satisfying performance. It can be seen that LBFGS, Adam and RMSprop convergence to the target with fewer epochs than PFTA and FTA. However, we find that Adam has some jumps before it converges to the optimal point. It indicates that its gradient flow is unstable. From Table.\ref{tab:Table-runtime}, we can see that LBFGS takes more runtime than RMSprop and PFTA for the training process. PFTA takes less runtime than FTA, which means it converges faster.  PFTA and RMSprop spend about the same amount of running time.  We know RMSprop adaptively use the past gradients for tuning the learning rate\cite{reddi2019convergence}. This proves that the strategy based on the terminal attractor is correct and the “long-term memory” of  of past gradients is import in the experiment. Future, we will consider the  “long-term memory” strategy for improving PFTA and PTA . In Fig.\ref{fig:Exp-1-PFTA-PTA-ADAM-RADAM-RMS-LB}  we compare the MSE curves of the 6 best algorithms. We can see the detail of the curves.
\begin{table*}
\centering
\begin{tabular}{|c|c|c|c|c|c|c|c|c|c|}
\hline
\textbf{Algorithm}& \textbf{RMSprop} &\textbf{LBFGS}& \textbf{Adam} &\textbf{Radam} & \textbf{Nadam}& \textbf{ASGD}&\textbf{SGD}& \textbf{PTA} &\textbf{PFTA}\\
\hline
Runtime&  15s&43s& 28s&13s & 2m22s& 16s&1m 56s& 2m 16s&17s\\
\hline
\end{tabular}
 \vspace{10pt}
               \caption{\label{citation-guide}Runtime in a training process.}
\label{tab:Table-runtime}
\end{table*}
\subsection{Image classification}
We completed an image classification task on the standard CIFAR-10 dataset\cite{cifar-10} using  ResNet-34 \cite{he2016deep}. In this experiment, we use 200 epochs for the  training procedure. We compare PFTA, PTA, and variants with popular optimization methods such as SGD, AdaGrad, Adam, AMSGrad, AdaBound\cite{luo2019adaptive}, and AMSBound\cite{luo2019adaptive}.  Each experiment is run three times from random starts. We choose the experiment results with the lowest training loss at the end for every algorithm. Except for PFTA and FTA,  all algorithms are reduced the learning rates by 10 after 150 epochs according with the experiment setting in the reference\cite{luo2019adaptive}.

The training accuracy and testing accuracy of every algorithm on ResNet-34 model are shown  in Fig. \ref{fig:Exp-2-Resnet-trainging} and Fig.\ref{fig:Exp-2-Resnet-testing} respectively.  It can be seen that from the very beginning, the accuracy of PFTA and PTA surpass all algorithm. Their optimal solutions have been reached at the 50th epoch. After the 50th epoch, they have been steadily rising slowly, with almost no fluctuation. After 150 epoches, although super parameters of PFTA and PTA aren't modified, their performance still are very excellent. We think that it is because at the moment \(\delta(\frac{1}{\left\Vert{}{\nabla{}}_wE\right\Vert{}})\) has  an upper bound, according to Eq.(\ref{dw/dt_PTA}) and Eq.(\ref{dw/dt_PFTA}), as  \(E\rightarrow 0\) , $|\frac{dw}{dt}| \rightarrow 0$. When \(\left\Vert{}{\nabla{}}_wE\right\Vert{} \rightarrow 0\), \(\frac{d^{2}E}{dEdt} \rightarrow -\infty{}\), the current point become a terminal attractor. The results show that the character of  the terminal attractor in PTA and PFTA can ensure the stability of the final stage of their gradient flow of the SGD and improve their convergence speed at the other stages.   
\begin{figure}
    \centering
    \includegraphics[width=0.8\linewidth]{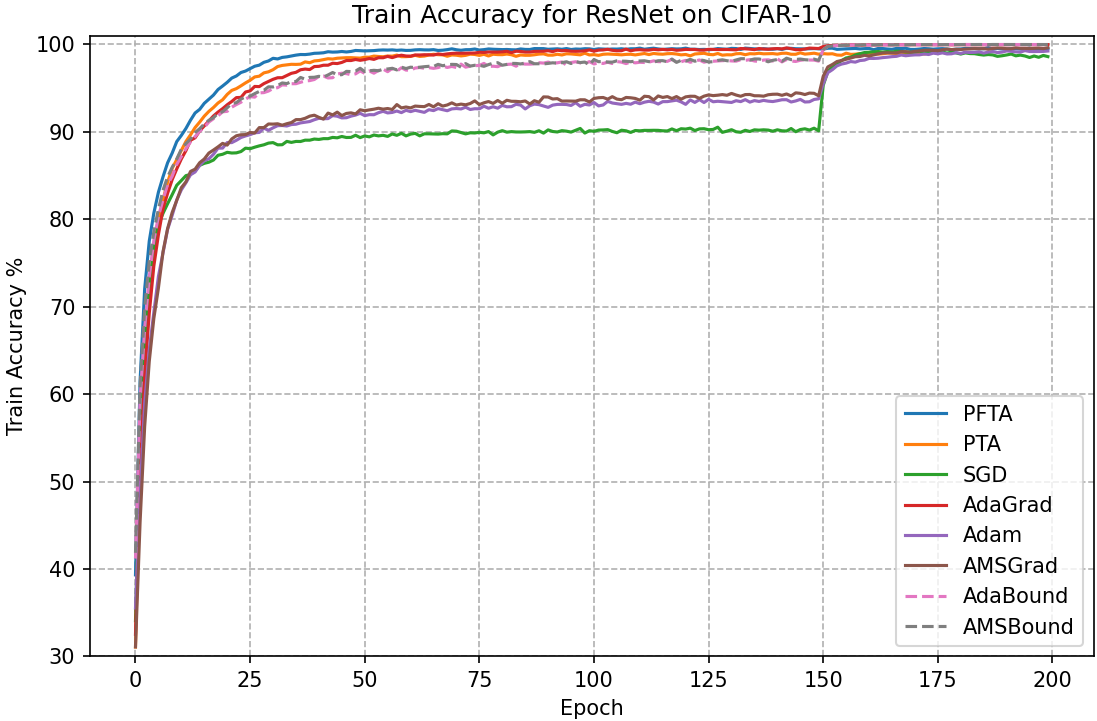}
    \caption{Training Accuracy for ResNet on CIFAR-10 dataset}
    \label{fig:Exp-2-Resnet-trainging}
\end{figure}
    \begin{figure}
        \centering
        \includegraphics[width=0.8\linewidth]{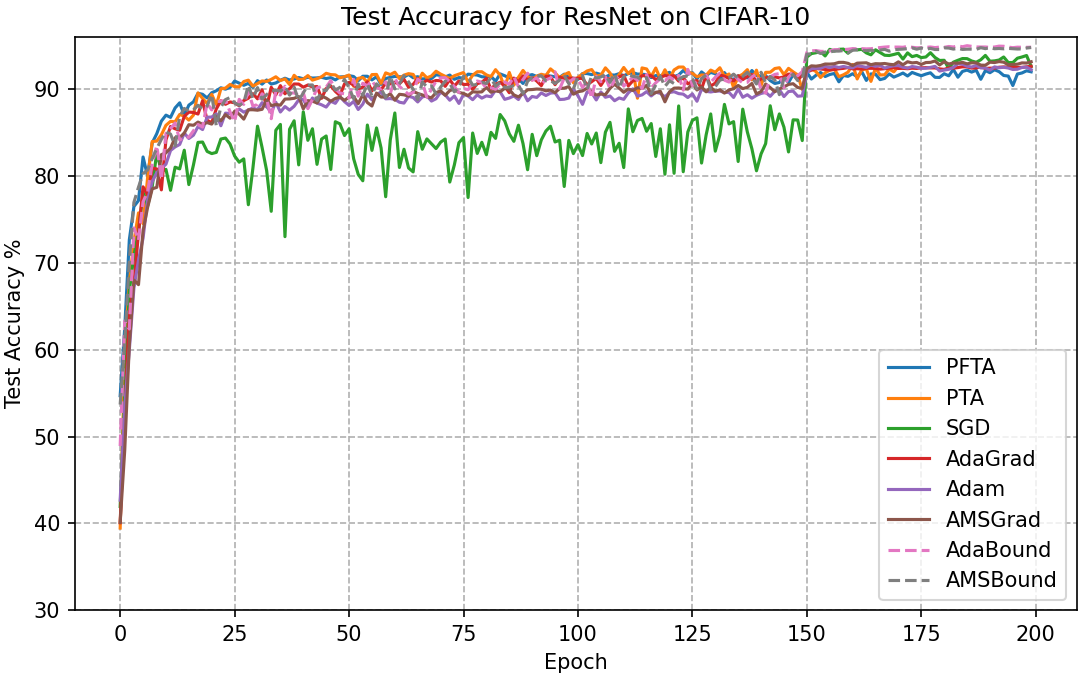}
        \caption{Testing Accuracy for ResNet on CIFAR-10 dataset}
        \label{fig:Exp-2-Resnet-testing}
    \end{figure}
\section{Conclusion}
The dynamics of the different stages of the gradient flow of the GD are carefully analyzed in this paper. Four adaptive learning rate schemes are discussed in terms of convergence speed, analyzing the gradient flow of GD in function of terminal sliding mode and terminal attractor theory. Their performance is analyzed theoretically. The total running times are investigated. The effectiveness of the methods is evaluated through various simulation results for a function approximation problem and an image classification problem. Further work will be carried out in the direction of designing and analyzing other learning rates based on TA and FTA models to further improve the performance of the GD and SGD algorithms. In the course of our research, we find that the direction of the jump of the gradient flow near the local minimum is still random. How to control the direction of the jump and how to control the weights to enter the neighborhood of the global minimum is important. In future work, we will also focus on the research of optimization algorithms based on the terminal attractor mechanism on large-scale machine learning problems.

\section{Acknowledgement}
This work is supported by the National Natural Science Foundation of China (Program Numbers 62176210, U20B2050); the Key Laboratory Program of the Education Department of Shaanxi Province: (Program Numbers 18JS076).

\section{References}
\bibliographystyle{IEEEtran}
\bibliography{PFTA-ARXIV}

\end{document}